\documentclass{article}
\usepackage{spconf,amsmath,graphicx,hyperref}
\usepackage{amsfonts}
\usepackage{algorithmic}
\usepackage{array}
\usepackage{textcomp}
\usepackage{stfloats}
\usepackage{url}
\usepackage{verbatim}
\usepackage{graphicx}
\usepackage{balance}
\usepackage{cite}
\DeclareGraphicsExtensions{.JPG,.eps,.pdf}
\usepackage{amsthm}
\newtheorem*{remark}{Remark}

\usepackage{epstopdf}
\usepackage{color}
\usepackage{amscd}
\usepackage{soul}
\usepackage{stfloats}
\usepackage{atbegshi}
\usepackage{flushend}
\usepackage{mathtools}
\DeclarePairedDelimiter\norm{\lVert}{\rVert}
\usepackage{tikz,pgfplots,xcolor}

\usepackage{dsfont}
\usepackage{bbm}

\usepackage{amssymb}
\usepackage{float}

\usepackage[ruled,norelsize]{algorithm2e}
\usepackage{bm}
\usepackage{soul}
\newcommand{\Ex}{\mathbb{E}}
\newcommand{\bmtheta}{\bm{\theta}}

\newcommand{\vecx}{\mathbf{x}}
\newcommand{\vecy}{\mathbf{y}}

\newcommand{\vecw}{\mathbf{w}}

\newcommand{\vecs}{\mathbf{s}}

\usepackage{caption}
\usepackage{xcolor}
\usepackage{textcomp}

\newtheorem{theorem}{Theorem}

\newtheorem{lemma}{Lemma}

\title{Channel-Adaptive Robust Aggregation for Over-the-Air Federated Learning in Heterogeneous Networks}
\name{Zubaida Fatima$^{\dagger *}$,~ Zubair Shaban$^{\dagger *}$,~ Yusuf Jamal$^{\dagger}$,~ Nazreen Shah$^{\dagger}$,~ Ranjitha Prasad$^{\dagger}$,~ B. N. Bharath$^{\#}$}
  
  \address{$^{\dagger}$IIIT, Delhi, $^{\#}$IIT Dharwad}

\begin{document}
%\ninept
%
\maketitle
\footnotetext[1]{* These authors contributed equally to this work.}

\begin{abstract}
The growing demand for privacy-preserving, data-intensive applications such as IoT, augmented reality, and autonomous systems positions Federated Learning (FL) as a key enabler in 6G networks. Over-the-Air FL (OTA-FL) leverages the superposition property of the wireless multiple access channel for efficient aggregation via simultaneous transmissions. Existing methods rely on fixed aggregation schedules and do not jointly address noise, fading, and client heterogeneity. We propose CHARGE-FL (CHannel-Adaptive Robust agGrEgation), a framework that adaptively schedules aggregation based on channel dynamics and application readiness. By combining a tailored optimization strategy with a dual-purpose precoding mechanism, CHARGE-FL mitigates channel distortion and bias from partial updates, achieving superior accuracy, stability, and convergence under realistic wireless conditions. Empirical results under realistic wireless conditions show that CHARGE-FL significantly improves accuracy, stability, and convergence over state-of-the-art OTA-FL methods, particularly in straggler-prone and noisy scenarios.
\end{abstract}
\begin{keywords}
\noindent Federated Learning, Wireless networks, Over-the-air, Multiple access channels, Stragglers
\end{keywords}
\section{Introduction}
\label{sec:intro}

To support increasingly complex edge intelligence applications, 6G is envisioned as a key enabler of next-generation distributed systems by integrating ultra-reliable low-latency communication, high-speed connectivity, and AI-driven decentralized processing \cite{6GIOT}. Federated Learning (FL) plays a central role in this vision, allowing edge devices to collaboratively train global models while preserving privacy and reducing communication costs \cite{mcmahan2017communication,li2019convergence}. In FL, a central server coordinates training by broadcasting a global model, which clients update locally and return for aggregation. However, wireless FL faces critical challenges from limited uplink bandwidth, shared medium constraints, and client heterogeneity \cite{10278452}. Conventional orthogonal access (e.g., TDMA, FDMA) scales poorly with many clients, causing inefficiency and slow convergence. Over-the-Air FL (OTA-FL) addresses this by exploiting the superposition property of the wireless multiple access channels (MAC) for simultaneous analog aggregation \cite{yang2020federated,sery2021over}, but remains highly sensitive to noise, fading, and heterogeneity, especially under straggler effects where slow clients delay or drop out of the process \cite{ozfatura2020straggler,10625955,10890519,shaban2025noise}.
\newline
\noindent Many real-world applications demand aggregation that adapts not only to computational readiness and task urgency but also to rapidly varying wireless channels. In settings such as vehicular networks, smart manufacturing, or healthcare, updates may be triggered by events (e.g., obstacle detection, system alerts) and must be transmitted reliably over fluctuating channels. Similar requirements arise in autonomous driving and industrial IoT, where responsiveness and security are critical. However, existing OTA-FL frameworks \cite{sery2020analog,mao2022charles,reviewerref} typically enforce fixed local update counts before aggregation, reducing adaptability to heterogeneous clients. In distributed optimization, FedProx \cite{li2020federated} and FedNova \cite{wang2020tackling} address client heterogeneity, while SCAFFOLD \cite{karimireddy2020scaffold} reduces the impact of data heterogeneity, but these methods are not tailored for wireless environments. To improve robustness in such settings, CHARLES \cite{mao2022charles,yang2022over} employs adaptive scaling based on channel state information (CSI), while COTAF \cite{sery2021over} mitigates the effect of channel noise. Despite these advances, both assume synchronous communication rounds and lack the flexibility to adapt aggregation timing to channel dynamics or application-driven triggers. By contrast, channel adaptive and application-driven aggregation enables updates to be transmitted when channels are favorable or when external events demand timely action.
%Despite these advances, both assume synchronous communication rounds and lack the flexibility to enable channel adaptive and application-driven aggregation, which in contrast allows updates to be transmitted when channels are favorable or when external events demand timely action.
% \newline

\noindent \textbf{Contributions:} We propose CHARGE-FL (CHannel-Adaptive Robust agGrEgation), a flexible framework for adaptive, event- and channel-aware aggregation in federated learning over noisy wireless MAC. Unlike schemes that employ rigid schedules, CHARGE-FL dynamically triggers aggregation based on client readiness, application demands, or favorable channel conditions. It introduces a joint optimization with dual-purpose epoch-specific precoding to mitigate channel distortion and heterogeneity from partial client participation, improving robustness, convergence, and model fidelity. We provide a theoretical analysis showing that the optimality gap decreases at an $O(1/T)$ rate, making CHARGE-FL the first framework to jointly address adaptivity, wireless impairments, and heterogeneity with theoretical guarantees.

\section{System Model}
\label{sec:sysmodel}
We consider a federated learning setup with $N$ clients and a central server communicating over a shared wireless MAC. Each client $k$ holds a local dataset $\mathcal{D}_k$ of size $D_k$ with samples $\{\vecs_i^k\}_{i=1}^{D_k}$. The objective is to minimize the empirical loss
\begin{equation}
f_k(\bmtheta) = \tfrac{1}{D_k}\sum_{i=1}^{D_k} \ell(\vecs_i^k;\bmtheta),
\end{equation}
where $\ell(\cdot;\cdot)$ is a user-defined loss and $\bmtheta \in \mathbb{R}^d$ are the model parameters. Local training is performed via SGD:
\begin{equation}
\bmtheta_{t +1}^k = \bmtheta_t^k - \eta_t \nabla f_{i_t^k}(\bmtheta_t^k),
\label{eq:localSGD}
\end{equation}
where $\eta_t$ represents the step size and $\nabla f_{i_t^k}(\bmtheta_t^k)$ represents the stochastic gradient at sample $i_t^k$.

\noindent Unlike traditional OTA-FL frameworks that enforce a fixed number of local updates per round, we adopt a \textit{channel- and application-adaptive formulation} where aggregation is triggered by factors such as client readiness, learning progress, or application events. Conventional methods, such as COTAF, adopt a fixed schedule with a total of 
$T = R E$ local updates, where $R$ is the number of global rounds and $E$ is the number of local steps per global round. In this setting, the server waits for all 
clients to complete $E$ local updates before performing aggregation. 
In contrast, we propose that rather than enforcing aggregation at predetermined time instants, it is triggered based on system demands or application events such as wireless channel conditions, client availability, or application-level demands, without adhering to a rigid synchronization schedule.
Specifically, in each round $r$, the clients perform their local updates sequentially, and
aggregation is performed after $\tau_r E$ local updates have been executed,
where $\tau_r \in (0,1]$ is a tunable parameter that determines the aggregation
point within round $r$. We define the set of aggregation time instances as $\mathcal{T}_{\mathrm{agg}} = \{ t_r \}_{r=1}^R, \text{with } t_r = t_{r-1} + \tau_r E,\; t_0 = 0.$
%This formulation allows aggregation to be triggered adaptively in response to wireless channel conditions, client availability, or application-level demands, without adhering to a rigid synchronization schedule.
\newline
The following lemma states that since each aggregation round in CHARGE-FL uses at most a fraction $\tau_r \leq 1$ of the maximum local steps $E$, the total number of iterations $T_c$ cannot exceed the fixed-schedule baseline $T = R E$.

\begin{lemma}\label{updateLemma}
   Let $T_c=\sum_{r=1}^R \tau_r E$ represent the number of iterations of updating the global model using CHARGE-FL, and the aggregation frequency parameter $\tau_r$ satisfies the condition $ \sum_{r=1}^R \tau_r\leq R$, then $T_c\leq T$.
\end{lemma}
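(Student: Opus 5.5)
The argument is a direct computation, so I will go straight from the definition of $T_c$ to the comparison with $T = RE$. The only ingredients are the definition of $T_c$ in Lemma~\ref{updateLemma}, the hypothesis $\sum_{r=1}^R \tau_r \leq R$, and the fact that $E$ is a positive integer because it counts local steps per round.

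First, since $E$ does not depend on $r$, I will pull it out of the sum:
\begin{equation*}
T_c = \sum_{r=1}^R \tau_r E = E \sum_{r=1}^R \tau_r .
\end{equation*}
Next, I will multiply both sides of $\sum_{r=1}^R \tau_r \leq R$ by $E > 0$, which preserves the inequality. This gives
\begin{equation*}
T_c = E\sum_{r=1}^R \tau_r \leq E R = T,
\end{equation*}
and the last equality is the definition of the fixed-schedule baseline $T = RE$ from Section~\ref{sec:sysmodel}.

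I will also note that the hypothesis holds automatically under the system model, since each $\tau_r \in (0,1]$ and so the sum is at most $R$. The lemma as stated only needs the weaker summed condition, which allows some rounds to exceed one as long as others compensate. There is no genuine obstacle here. The only point to check is the sign of $E$, and that holds because $E$ is a positive step count.
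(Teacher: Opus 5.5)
Your proof is correct and is essentially the paper's argument; the paper gives no formal proof beyond the one-line remark before the lemma. Factoring out $E>0$ and applying $\sum_{r=1}^R \tau_r \le R$ gives $T_c = E\sum_{r=1}^R \tau_r \le RE = T$, and your observation that the paper's per-round bound $\tau_r \le 1$ implies the summed hypothesis matches how the paper motivates the lemma.
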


\noindent Let $\mathcal{E}_t \subseteq [E]$ be the set of possible local steps at time $t$ and $\mathcal{S}_{t,e} \subseteq\{1,2,\dots, N\}$ the set of clients that completed exactly $e$ local steps before the aggregation deadline. We define $Q_{t} = |\mathcal{E}_{t}|$ and $N_{t,e} = |\mathcal{S}_{t,e}|$ . Given the above, we formulate the global optimization objective $F(\bmtheta)$ as:
\vspace{-1em}
\begin{equation} 
 \bmtheta^* = \arg\min_{\bmtheta} \left\{ F(\bmtheta) = \sum_{e \in \mathcal{E}_t}\frac{1}{N_{t,e}} \sum_{k \in \mathcal{S}_{t,e}}  f_k(\bmtheta) \right\}, 
\end{equation}
where $f_k(\bmtheta)$ is the local objective at the $k$-th client for all $k \in [N]$. Let $\bmtheta_{t,e}^k$ be the local model at client $k$ after $e$ local steps. The proposed server aggregation rule is given as: 
\begin{equation} 
\bmtheta_{t} \leftarrow \sum_{e \in \mathcal{E}_t} \frac{1}{Q_t N_{t,e} } \sum_{k \in \mathcal{S}_{t,e}} \bmtheta_{t,e}^k 
\label{eq:aggregationrule}
\end{equation} 
Note that the above aggregation and problem formulation inherently ensure participation of all clients, irrespective of the number of local steps.
\begin{remark} \label{expectationLemma} The aggregation scheme satisfies the expectation condition:
$\sum_{e\in\mathcal{E}_t} \frac{1}{Q_t N_{t,e} } \sum_{k \in \mathcal{S}_{t,e}}1  = 1.$
\end{remark}

\noindent The above result states that the aggregation rule in \eqref{eq:aggregationrule} is appropriately normalized. This ensures that the aggregated model is an unbiased convex combination of client updates and prevents over-representation of faster clients and under-representation of slower clients, thus helping to mitigate client drift due to heterogeneous training progress.

\vspace{-3mm}
\subsection{Dual Purpose Precoding}
\label{ssec:dualprecoding}
To minimize the impact of channel noise under a power constraint $P$, we introduce a dynamic precoding scheme that adapts to each client’s computation level so that the clients that complete fewer local steps require a different precoding to ensure fair contribution. In CHARGE-FL, we consider two variants for uplink transmission under the average power budget $P$, both designed to compensate for partial local progress.  

\noindent \textbf{Scheme I (local step-wise transmission)}: Let $\bmtheta_{t,0}^k$ denote the $k$-th client’s local model at time $t$, and define $\Delta\bmtheta_{t,e}^k \triangleq \bmtheta_{t,e}^k - \bmtheta_{t,0}^k$. Each client in $\mathcal{S}_{t,e}$ transmits $\vecx_{t,e}^k =\sqrt{\alpha_{t,e}}\,\Delta\bmtheta_{t,e}^k$, where the common scaling factor is given as
\begin{align}
\alpha_{t,e} = \frac{P}{\max\limits_{k\in\mathcal{S}_{t,e}} \mathbb{E}\!\left[\|\Delta\bmtheta_{t,e}^k\|^2\right]},
\label{eq:perEalpha}
\end{align}
with the expectation taken over local SGD randomness. This guarantees $\mathbb{E}[\|\vecx_{t,e}^k\|^2]\le P$ for all $k \in \mathcal{S}_{t,e}$. The server receives $\vecy_{t,e} = \sqrt{\alpha_{t,e}}\sum_{k\in\mathcal{S}_{t,e}}\Delta\bmtheta_{t,e}^k + \tilde{\vecw}_{t,e}$, where $\tilde{\vecw}_{t,e}\sim\mathcal{N}(0,\sigma_w^2\mathbf{I}_d),$ and reconstructs the noisy aggregate as  
\begin{align}
\tilde \bmtheta_{t,e} = \frac{\vecy_{t,e}}{N_{t,e}\sqrt{\alpha_{t,e}}} + \bmtheta_{t,0} 
= \frac{1}{N_{t,e}}\sum_{k\in\mathcal{S}_{t,e}}\bmtheta_{t,e}^k + \vecw_{t,e},
\label{eq:thetate}
\end{align} 
with $\vecw_{t,e}\sim\mathcal{N}(0,\tfrac{\sigma_w^2}{N_{t,e}^2\alpha_{t,e}}\mathbf{I}_d)$. The global model is updated as  
\begin{align}
\bmtheta_t = \sum_{e\in\mathcal{E}_t} \frac{1}{Q_t N_{t,e}} \sum_{k \in \mathcal{S}_{t,e}} \bmtheta_{t,e}^k + \vecw_t, 
\label{eq:server_global}
\end{align} 
where $\vecw_t \triangleq \sum_{e\in\mathcal{E}_t} \tfrac{\tilde\vecw_{t,e}}{Q_t N_{t,e} \sqrt{\alpha_{t,e}}}$.  
This scheme provides fine-grained aggregation but requires up to $E$ separate uplink communications per round.  

\noindent \textbf{Scheme II (single-shot transmission)}: To reduce communication cost in Scheme I, we approximate the per-local step scaling factors by their average,  
\vspace{-2pt}
\begin{align}
\alpha_t = \frac{1}{Q_t}\sum_{e \in \mathcal{E}_t} \alpha_{t,e}
\label{eq:avgAlpha}
\end{align} 
\vspace{-2pt}and apply this common factor across all client groups $\mathcal{S}_{t,e}$. Each client then transmits only once per round using $\alpha_t$, eliminating the need for multiple transmissions while still preserving the power constraint. Unlike COTAF, the computation for this precoding takes into account the different local step-wise precoding factors by averaging them, thereby resulting in a unified precoding factor. The server receives the update as:
\begin{align}
\vecy_{t,e} = \sqrt{\alpha_{t}}\sum_{k\in\mathcal{S}_{t,e}}\Delta\bmtheta_{t,e}^k + \tilde{\vecw}_{t,e},
\label{eq:ytenew}
\end{align}
which is decoded similarly to the technique stated in the previous scheme using $\alpha_t$.
 
 \noindent \textbf{Fading Channels:} We consider a block-fading channel for client $k$ at time $t$, modeled as $\bar h_t^k = h_t^k e^{j\Omega_t^k}$, where $h_t^k$ and $\Omega_t^k$ denote the fading magnitude and phase, respectively. When $h_t^k$ is small,  noise dominates, degrading performance. To address this, we employ the threshold-based selection where clients with $h_t^k > \hat h$ transmit \cite{sery2021over}, using the precoded signal given as $\vecx_{t,e}^k = \frac{\hat h \sqrt{\alpha_t}}{h_t^k} e^{-j\Omega_t^k}\Delta \bmtheta_{t,e}^k$, 
while clients below the threshold do not participate, leading to the partial participation-based FL induced due to $\hat h$.

\begin{algorithm}[t]
\SetAlgoLined
\KwIn{Clients $N$, max local steps $E$, aggregation rounds $\mathcal{T}_{\text{agg}}$, Initialization $\bmtheta_0$.}
\For{$t=0,\ldots,T$}{
    Each client $k \in [N]$ performs local SGD on $\bmtheta_t^k$.\\
    \If{$t \in \mathcal{T}_{\text{agg}}$}{
        \For{$e \in \mathcal{E}_t$}{
            Each $k \in \mathcal{S}_{t,e}$ computes update 
            $\Delta \bmtheta_{t,e}^k = \bmtheta_{t,e}^k - \bmtheta_{t,0}^k$.\\
            Compute precoding factor $\alpha_{t,e}$ via \eqref{eq:perEalpha} (Scheme I) or $\alpha_t$ via \eqref{eq:avgAlpha} (Scheme II).\\
            Transmit precoded signal $\vecx_{t,e}^k$ over MAC.\\
        }
        Server decodes $\bmtheta_{t,e}$ using \eqref{eq:thetate}, 
        aggregates via \eqref{eq:server_global}, and updates $\bmtheta_t$.\\
        Server broadcasts $\bmtheta_t$; each client synchronizes $\bmtheta_t^k \gets \bmtheta_t$.\\
    }
}
\KwOut{Final global model $\bmtheta_T$}
\caption{CHARGE-FL (with precoding Scheme I or II)}
\label{tab:AlgTable}
\end{algorithm}

\section{Convergence Analysis and Discussions}
\label{sec:convganalysis}

We now present a convergence analysis for the CHARGE-FL proposed in the previous section. We define the degree of data heterogeneity as
$\Gamma \triangleq F^* - \tfrac{1}{N}\sum_{k=1}^N f_k^*$, where $f_k^* \triangleq \min_{\bmtheta} f_k(\bmtheta),$ and let $\delta_t\triangleq\mathbb{E}\big[\|\bmtheta_t-\bmtheta^*\|^2\big]$. We make the following assumptions:

\noindent [\textbf{AS1}] Each $f_k$ is $L$-smooth and $\mu$-strongly convex.

\noindent [\textbf{AS2}] For any iterate $\bmtheta$ and sample $j_t^k$ used by client $k$,  
\[
 \mathbb{E}\big[\|\nabla f_{j_t^k}(\bmtheta)\|^2\big]\le G^2, \quad
 \mathbb{E}\big[\|\nabla f_{j_t^k}(\bmtheta)-\nabla f_k(\bmtheta)\|^2\big]\le M_k^2.
\]

\begin{lemma}[Bound on precoding factor]
Suppose AS2 holds with step-sizes $\eta_s>0$ and $\eta_{\max} \triangleq \max_s\eta_s$. For $\Delta\bmtheta_{t,e}^k \triangleq \bmtheta_{t,e}^k - \bmtheta_{t,0}^k$, the precoding factor satisfies
$\mathbb{E}\big[\|\Delta\bmtheta_{t,e}^k\|^2\big] \le e^2\eta_{\max}^2 G^2,$
and hence $\tfrac{1}{\alpha_{t,e}} \le \tfrac{e^2\eta_{\max}^2 G^2}{P}$.
\end{lemma}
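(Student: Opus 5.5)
The plan is to unroll the local SGD recursion \eqref{eq:localSGD} over the $e$ local steps performed by client $k$ since the last synchronization. Because every client is reset to the global model at each aggregation instant, the local trajectory starts at $\bmtheta_{t,0}^k$. Hence
\begin{equation*}
\Delta\bmtheta_{t,e}^k = \bmtheta_{t,e}^k - \bmtheta_{t,0}^k = -\sum_{s=0}^{e-1} \eta_{s}\, \nabla f_{i_s^k}(\bmtheta_{t,s}^k),
\end{equation*}
where $\eta_s$ denotes the step size used at the $s$-th local step of that round. This reduces the claim to bounding the second moment of a sum of $e$ stochastic gradients.

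Next, I would apply the elementary inequality $\|\sum_{s=0}^{e-1} \mathbf{a}_s\|^2 \le e \sum_{s=0}^{e-1}\|\mathbf{a}_s\|^2$. This is Cauchy--Schwarz, equivalently Jensen on the convex function $\|\cdot\|^2$. It gives
\begin{equation*}
\mathbb{E}\big[\|\Delta\bmtheta_{t,e}^k\|^2\big] \le e \sum_{s=0}^{e-1} \eta_s^2\, \mathbb{E}\big[\|\nabla f_{i_s^k}(\bmtheta_{t,s}^k)\|^2\big].
\end{equation*}
Each step size satisfies $\eta_s \le \eta_{\max}$. Each stochastic gradient has second moment at most $G^2$ by AS2, applied at the iterate $\bmtheta_{t,s}^k$.

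The one point needing care is that this iterate is itself random. AS2 is stated for any iterate $\bmtheta$, so I would condition on the history up to step $s$, apply the bound with $\bmtheta=\bmtheta_{t,s}^k$, and then take total expectation via the tower property. Summing $e$ terms, each at most $\eta_{\max}^2 G^2$, and multiplying by the prefactor $e$ yields $e^2\eta_{\max}^2G^2$.

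Finally, since the bound holds for every $k\in\mathcal{S}_{t,e}$, it also holds for the maximum in \eqref{eq:perEalpha}. Substituting into the definition gives $1/\alpha_{t,e} = \max_k \mathbb{E}[\|\Delta\bmtheta_{t,e}^k\|^2]/P \le e^2\eta_{\max}^2G^2/P$.

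There is no real obstacle. The only subtle step is the conditioning argument for applying AS2 at random iterates. The crude Cauchy--Schwarz factor $e$ is exactly what produces the $e^2$ in the statement, so no sharper variance decomposition is needed.
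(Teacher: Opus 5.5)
Your proof is correct and follows essentially the same argument as the paper's appendix. The paper also unrolls the local SGD recursion, applies $\|\sum_{s}\mathbf{a}_s\|^2\le e\sum_s\|\mathbf{a}_s\|^2$, and bounds each stochastic gradient's second moment by $G^2$ via AS2; the only cosmetic difference is that it bounds the step sizes in the window by $\eta_{t-e}$ (decreasing step sizes) and then by $\eta_{t-e}^2\le 4\eta_t^2$ for later use, whereas you and the stated lemma simply use $\eta_{\max}$, and your explicit conditioning argument for applying AS2 at random iterates fills in a detail the paper leaves implicit.
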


\noindent Thus, $\alpha_{t,e}$ both enforces the power constraint and amplifies smaller updates, ensuring that clients with fewer local steps remain fairly represented in the aggregate. The proofs of the lemmas, theorems, and some of the constants are delegated to the supplementary.

\begin{theorem}
    \label{thm:convergence}
Let $\delta_t\triangleq\mathbb{E}[\|\bmtheta_t-\bmtheta^*\|^2]$.  
Suppose AS1--AS2 hold, and set $\eta_t=\tfrac{2}{\mu(\gamma+t)}$ with $\gamma \geq \max(\tfrac{8L\rho}{\mu},E)$ and $\rho>\tfrac{1}{\mu}$. Then:
\vspace{-2mm}
\begin{small}
 \begin{align}
\mathbb{E}[F(\bmtheta_t)] - F(\bmtheta^*) 
&\leq 
\sum_{e\in\mathcal{E}_t}\frac{e^2}{\Xi_{t,e}^2 Q_t}\,
\frac{2L \max\!\left(4\tilde{C}, \mu^2 \gamma \delta_0\right)}{\mu^2 (t + \gamma)},
\end{align}
\end{small}
where $\tilde{C} = \hat{B} + \tfrac{4d\sigma^2_w G^2}{P\hat{h}}$, and $\hat{B}$ depends on gradient variance, client heterogeneity, and local step distribution. $\Xi_{t,e}=N_{t,e}$ in the no-fading case and $\Xi_{t,e}=K_{t,e}$ under fading with threshold $\hat{h}$. In fading channels, an additional degradation arises as
$\tilde{C} \;\gets\; \tilde{C} + \tilde{D}$, where $\tilde{D} = \tfrac{4(N-K)G^2}{N-1}\sum_{e\in\mathcal{E}_t} K_{t,e}$ with $K$ denoting the active clients at time $t$.
\end{theorem}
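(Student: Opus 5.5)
The plan is to follow the standard local-SGD analysis of Li et al.\ for FedAvg on non-iid data, adapted to the COTAF noisy-aggregation setting, and to carry the heterogeneous local-step structure through the weights $\tfrac{1}{Q_tN_{t,e}}$.

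\textbf{Virtual sequence.} Introduce $\bar{\bmtheta}_t = \sum_{e\in\mathcal{E}_t}\tfrac{1}{Q_tN_{t,e}}\sum_{k\in\mathcal{S}_{t,e}}\bmtheta_{t,e}^k$. By the Remark this is a convex combination, so $\bmtheta_t=\bar{\bmtheta}_t+\vecw_t$ at aggregation instants. Because $\vecw_t$ is zero mean and independent of the iterates, the cross term vanishes and
\[
\delta_t=\mathbb{E}\|\bar{\bmtheta}_t-\bmtheta^*\|^2+\mathbb{E}\|\vecw_t\|^2 .
\]

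\textbf{One-step recursion.} For $\bar{\bmtheta}_t$ I would expand one local SGD step exactly as in Li et al. Use AS1 (smoothness and strong convexity) to obtain the contraction $(1-\mu\eta_t)$. Bound the stochastic-gradient variance by the weighted sum of the $M_k^2$, and bound the heterogeneity term by $6L\eta_t^2\Gamma$. The client-drift term $\mathbb{E}\sum\|\bar{\bmtheta}_t-\bmtheta_{t,e}^k\|^2$ is bounded by $4\eta_t^2e^2G^2$ via AS2, since only $e\le E$ steps have elapsed since the last synchronization. Here $\gamma\ge E$ gives $\eta_{t_0}\le 2\eta_t$, and $\gamma\ge 8L\rho/\mu$ lets the $\eta_t^2L$ terms be absorbed. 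Collecting these terms, with the $e^2$ and $1/N_{t,e}$ weights kept explicit, gives $\hat B$.

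\textbf{Noise term.} By the precoding-factor Lemma, $1/\alpha_{t,e}\le e^2\eta_{\max}^2G^2/P$. Hence
\[
\mathbb{E}\|\vecw_t\|^2=\sum_e\frac{d\sigma_w^2}{Q_t^2N_{t,e}^2\alpha_{t,e}}\le \sum_e\frac{e^2}{Q_tN_{t,e}^2}\,\frac{d\sigma_w^2\eta_t^2\,4G^2}{P},
\]
using $\eta_{\max}\le 2\eta_t$ within a round and $Q_t\ge1$. The factor $1/\hat h$ enters in the fading case through the amplitude $\hat h\sqrt{\alpha_t}$. For Scheme II the same bound holds after applying Jensen to the average $\alpha_t$.

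\textbf{Fading.} Under fading, only $K_{t,e}$ clients above threshold participate. I would add the partial-participation sampling variance, following Li et al.'s bound for sampling without replacement, $\tfrac{N-K}{N-1}\cdot 4\eta_t^2E^2G^2$ per group. This yields $\tilde D$ and replaces $N_{t,e}$ by $K_{t,e}$ in $\Xi_{t,e}$.

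\textbf{Induction and conversion.} Combining the pieces gives
\[
\delta_{t+1}\le(1-\mu\eta_t)\delta_t+\eta_t^2\sum_e\frac{e^2}{\Xi_{t,e}^2Q_t}\,\tilde C .
\]
With $\eta_t=\tfrac{2}{\mu(\gamma+t)}$, a standard induction shows
\[
\delta_t\le \sum_e\frac{e^2}{\Xi_{t,e}^2Q_t}\,\frac{v}{\gamma+t},\qquad v=\max\!\left(\frac{4\tilde C}{\mu^2},\gamma\delta_0\right).
\]
The base case requires checking that the weight factor is at least $1$ or normalising $\delta_0$ accordingly. The inductive step uses $(1-\tfrac{2}{t+\gamma})\tfrac{v}{t+\gamma}+\tfrac{4\tilde C}{\mu^2(t+\gamma)^2}\le\tfrac{v}{t+\gamma+1}$. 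Finally, $L$-smoothness gives $\mathbb{E}F(\bmtheta_t)-F^*\le\tfrac L2\delta_t$, which produces the stated bound.

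\textbf{Main obstacle.} The hard step is that the weights $\sum_e e^2/(\Xi_{t,e}^2Q_t)$ change with $t$ because the sets $\mathcal{S}_{t,e}$ and $\mathcal{E}_t$ vary, so the induction does not close with a time-varying prefactor. My first attempt would be to run the induction with the worst case (maximum over rounds) of this weight and then note that it is attained or bounded at time $t$. Failing that, I would absorb it into $\tilde C$ and restate the bound in that form.
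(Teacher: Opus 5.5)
Your outline follows the paper's proof almost step for step:

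\begin{itemize}
\item the virtual sequence with weights $\frac{1}{Q_tN_{t,e}}$;
\item the Li et al.\ one-step lemma with the $M_k^2$, drift and $6L\eta_t^2\Gamma$ terms;
\item the precoding bound $1/\alpha_{t,e}\le 4e^2\eta_t^2G^2/P$, which gives the noise term $\frac{4d\sigma_w^2G^2}{PQ_t}\sum_{e}\frac{e^2}{N_{t,e}^2}$;
\item factoring out $w_t\triangleq\sum_{e\in\mathcal{E}_t}\frac{e^2}{N_{t,e}^2Q_t}$, then the COTAF induction and $L$-smoothness.
\end{itemize}

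The one difference is cosmetic. You split $\delta_t=\mathbb{E}\|\bar{\bmtheta}_t-\bmtheta^*\|^2+\mathbb{E}\|\vecw_t\|^2$ at aggregation instants. The paper instead follows COTAF and folds $\bar{\vecw}_t=\vecw_{t+1}\mathbbm{1}_{t+1\in H}-\vecw_t\mathbbm{1}_{t\in H}$ into $\mathbb{E}\|\mathbf{g}_t-\bar{\mathbf{g}}_t+\bar{\vecw}_t/\eta_t\|^2$. Both give the same $\eta_t^2$-order contribution.

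On fading: to land on the stated $\tilde{D}$, your per-group sampling term should carry Li et al.'s $1/K$ factor and the group's own $e$, i.e.\ $\frac{N-K}{N-1}\frac{4\eta_t^2e^2G^2}{K_{t,e}Q_t}$. Pulling $\sum_e\frac{e^2}{K_{t,e}^2Q_t}$ out of that sum is what leaves $\sum_eK_{t,e}$.

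The obstacle you flag is a genuine gap, and the paper does not close it. After reaching $\delta_{t+1}\le(1-\mu\eta_t)\delta_t+\eta_t^2C_t$ with $C_t\le w_t\hat{C}_t$, it simply quotes the constant-$C$ result as though $w_t$ and $\hat{C}_t$ did not depend on $t$. (This factoring is where the odd terms $8G^2\sum_eN_{t,e}^2$, $\sum_e\frac{1}{e^2Q_t}\sum_kM_k^2$ and $6L\Gamma/w_t$ in $\hat{B}$ come from.)

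Your first remedy fails, because $\max_s w_s\le w_t$ is false in general. The inductive step $\delta_t\le w_t\frac{v}{\gamma+t}\Rightarrow\delta_{t+1}\le w_t\frac{v}{\gamma+t+1}$ propagates only if $w_t\le w_{t+1}$. What you can prove is one of two things:
\begin{itemize}
\item the bound with $w_t$ replaced by the running maximum $\max_{s\le t}w_s$ and $\tilde{C}$ by $\sup_s\tilde{C}_s$; or
\item the stated form, for a fixed straggler pattern.
\end{itemize}
In either case take $v=\max(4\tilde{C}/\mu^2,\gamma\delta_0/w_0)$, since $w_0$ is typically below $1$ (e.g.\ $E^2/N^2$ when $Q_t=1$).

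Two further steps need care, and the paper glosses over them as well.

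\textbf{Contraction with stopped stragglers.} Your contraction $(1-\mu\eta_t)$ tacitly has every client stepping at every local iteration, as the paper's $\bar{\mathbf{g}}_t$ does. Once group $e$ has stopped, the virtual increment carries gradient weight only $|\{e'\in\mathcal{E}_t:e'>s\}|/Q_t<1$. A faithful expansion then gives a weaker contraction. It also leaves a term of order $\eta_s$ rather than $\eta_s^2$, because the objective of the still-active clients is not stationary at $\bmtheta^*$.

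\textbf{Scheme II.} Jensen only yields $1/\alpha_t\le\frac{1}{Q_t}\sum_{e'}1/\alpha_{t,e'}$. This replaces each group's $e^2$ by $\frac{1}{Q_t}\sum_{e'}e'^2$, so it does not reproduce the same bound.
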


\noindent Theorem~\ref{thm:convergence} shows that CHARGE-FL achieves an $\mathcal{O}(1/T)$ convergence rate in both ideal and fading channels. In the baseline case, the error depends on $\sigma_w^2/P$ and the heterogeneity factor $\sum_{e\in\mathcal{E}_t} e^2/(N_{t,e}^2Q_t)$, which vanishes when all clients complete the same number of local steps, reducing CHARGE-FL to COTAF. In fading, the additional terms $\tilde{D}$ and $\tfrac{4d\sigma^2_w G^2}{P\tilde{h}}$ capture the effects of limited client participation and higher effective noise. Despite these impairments, CHARGE-FL preserves the same $\mathcal{O}(1/T)$ rate while balancing heterogeneous client updates and channel variability.
  
\noindent From \eqref{eq:server_global}, the cumulative noise power is given as
\vspace{-2mm}
\[
\mathbb{E}\big[\|\vecw_t\|^2\big]
= \frac{d\,\sigma_w^2}{Q_t^2}\sum_{e\in\mathcal{E}_t}\frac{1}{N_{t,e}^2\alpha_{t,e}}
\;\le\; \frac{d\,\sigma_w^2}{Q_t}\max_{e\in\mathcal{E}_t}\frac{1}{N_{t,e}^2\alpha_{t,e}}.
\vspace{-2mm}
\]
Hence, for fixed per-group scaling, larger $Q_t$ yields stronger attenuation of cumulative noise. When all clients perform $E$ local steps (so $Q_t=1$), the noise matches that of standard COTAF. With heterogeneous local steps, $Q_t>1$ provides additional suppression, allowing CHARGE-FL to (a) reduce cumulative noise under partial participation, and (b) automatically adapt the effective SNR to client participation without altering transmission power.
\vspace{-4mm}
\section{Simulation Results and Discussion}
\label{sec:simresult}

In this section, we present experimental results of CHARGE-FL against relevant baselines. 

\noindent \textbf{Datasets and Model:} We conduct experiments on the CIFAR-10 and CIFAR-100~\cite{krizhevsky2009learning} datasets using a custom-built CNN with three convolutional and two fully connected layers. The training datapoints are split among clients under both IID and non-IID partitions. For the non-IID partition, we adopt a label-skew strategy~\cite{sery2021over} with similarity parameter $\beta=0.5$ unless specified. Unless otherwise noted, we use $30$ clients, mini-batch size $64$, $150$ communication rounds, and a MAC channel with $0$ dB SNR. 
%The precoding factor $\alpha_{t,e}$ is implemented by upper bounding $\max_{k\in \mathcal{S}_{t,e}}\mathbb{E}[\|\bmtheta_{t,e}^k-\bmtheta_{t,0}^k\|^2]$ with $E^2\eta_t G^2$ as in \cite{sery2021over}, where $G$ is estimated empirically from gradient norms.  

\noindent \textbf{Baselines:} We compare with COTAF~\cite{sery2021over}, which uses fixed aggregation and excludes stragglers, and with NoisyProx (a FedProx variant~\cite{li2020federated}), which lacks precoding to mitigate channel noise.

\begin{figure}[htbp]
    \centering
    \includegraphics[scale=0.185]{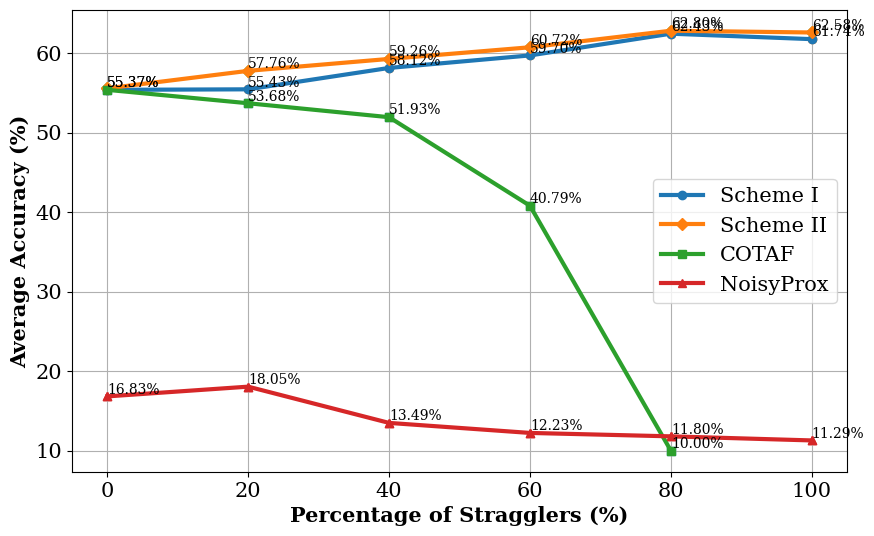}
    \includegraphics[scale=0.185]{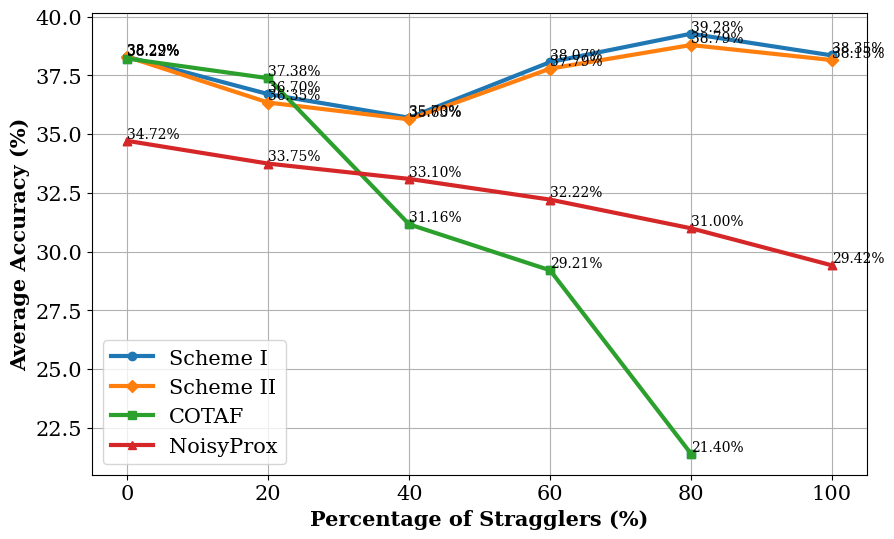}
    \includegraphics[scale=0.185]{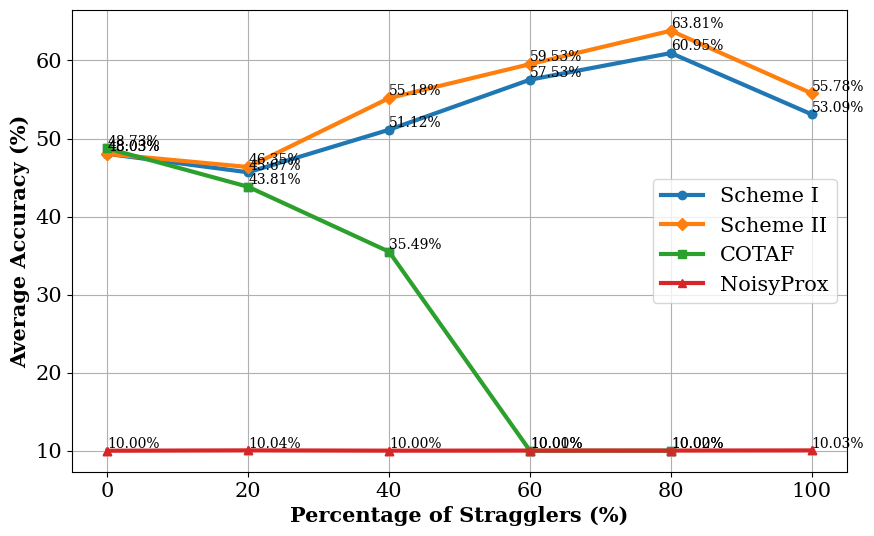}
     \includegraphics[scale=0.185]{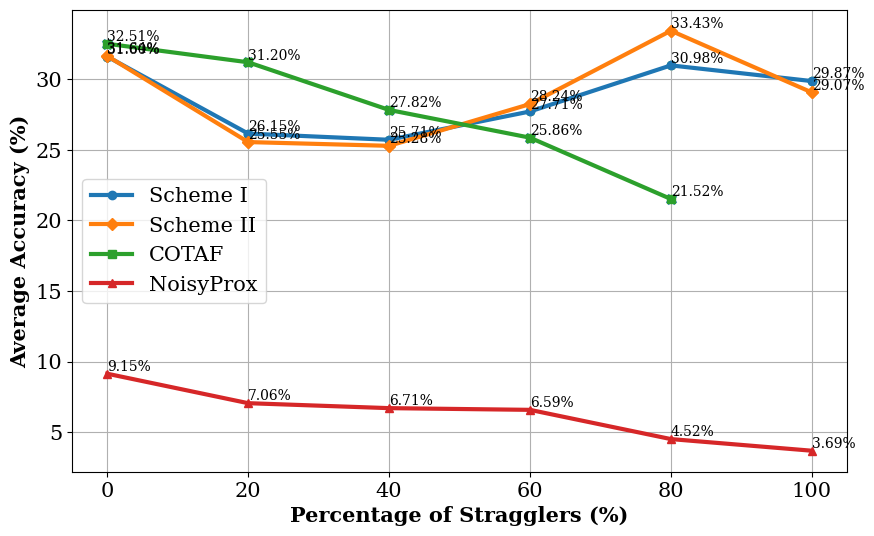}
    \caption{The average accuracy of CHARGE-FL with baseline algorithms for AWGN (top) and fading channel (bottom) on the CIFAR-10 (left) and CIFAR-100 (right) datasets.}
    \vspace{-3mm}
    \label{fig:sim05c10c100}
\end{figure}

\noindent To assess CHARGE-FL in adaptive settings, Fig.~\ref{fig:sim05c10c100} compares it with baselines under varying straggler levels. The top row shows average accuracy on CIFAR-10 (left) and CIFAR-100 (right) with noisy channels, and the bottom row with fading channels, all under non-IID data ($\beta=0.5$). At $20\%$ stragglers, CHARGE-FL shows a slight accuracy drop due to uneven local step completion, where small groups yield low normalization factors $Q_tN_{t,e}$. As the straggler fraction increases, CHARGE-FL consistently outperforms baselines, highlighting its robustness to heterogeneous client progress. 
\vspace{-3mm}
\subsection{Varying data heterogeneity}
\label{ssec:heterogeneity}

\begin{figure}[h]
    \centering
    \includegraphics[scale=0.185]{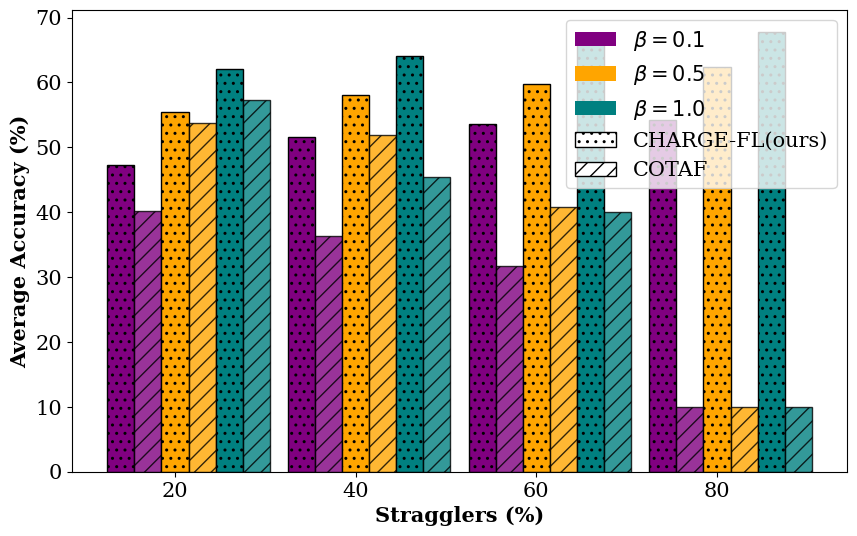}
    \includegraphics[scale=0.185]{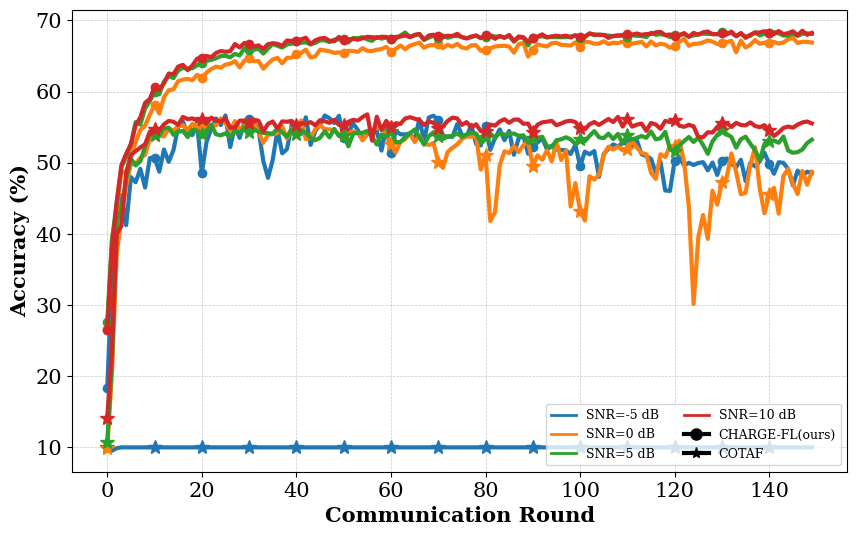}
    \caption{The average accuracy of CHARGE-FL (Scheme II) and COTAF on the CIFAR-10 dataset across varying data heterogeneity (left) and varying SNR (right).}
    \label{fig:varyinghet}
\end{figure}
\noindent We examine the impact of data heterogeneity on CHARGE-FL (Scheme II) as compared to COTAF. In Fig.~\ref{fig:varyinghet} (left), we observe the accuracy on CIFAR-10 across heterogeneity levels across varying similarity parameter $\beta$ values and straggler percentages. Lower $\beta$ indicates higher heterogeneity. The results show that CHARGE-FL consistently outperforms COTAF, with particularly large gains under high heterogeneity ($\beta=0.1$), demonstrating its robustness.
\vspace{-3mm}
\subsection{Varying SNR}
\label{ssec:snr}

We evaluate the effect of SNR on performance by comparing CHARGE-FL (Scheme II) with COTAF on CIFAR-10 (Fig.~\ref{fig:varyinghet}, right). CHARGE-FL consistently achieves higher accuracy across SNR levels. While accuracy drops at very low SNR ($-5$ dB), it still converges reliably, unlike COTAF which fails under such conditions. 
\vspace{-1em}
\subsection{Varying Number of local steps at Stragglers}
\label{ssec:epochdist}
\begin{figure}
    \centering
    \includegraphics[scale=0.185]{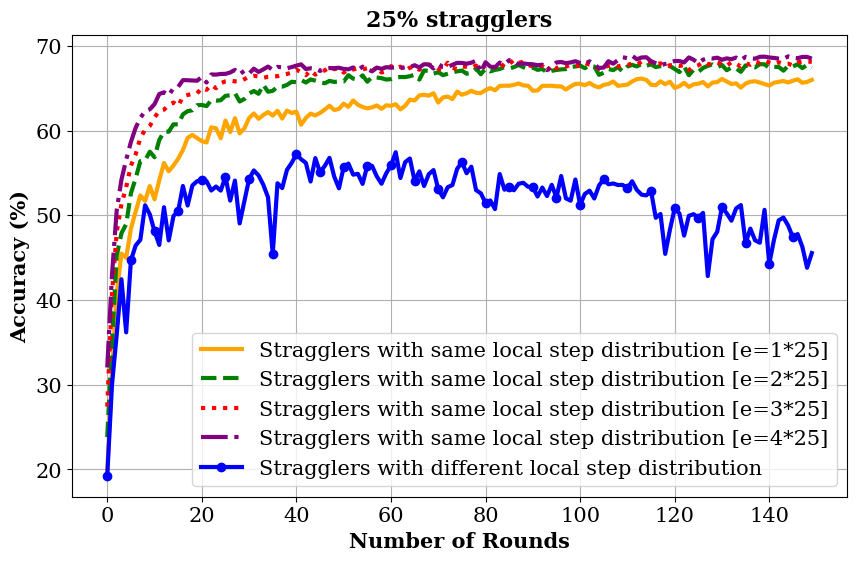}
    \includegraphics[scale=0.185]{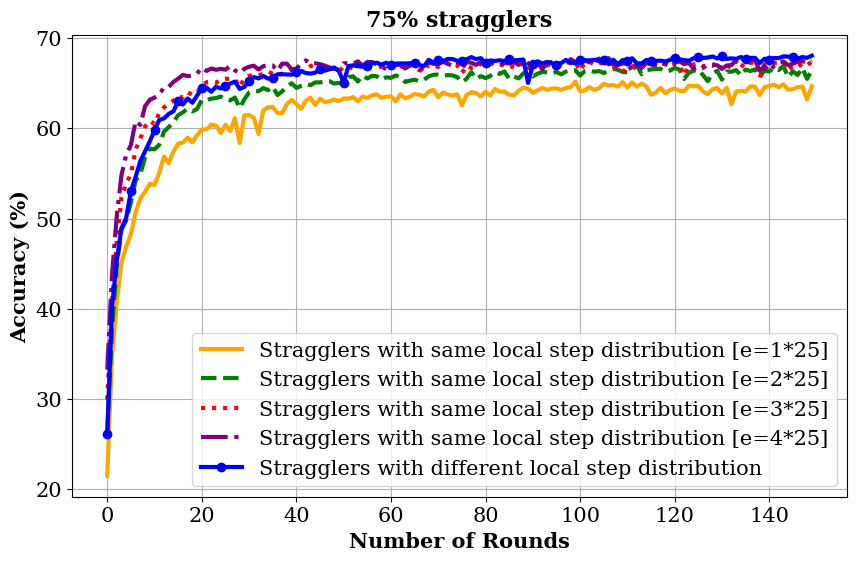}
    \caption{Accuracy performance of CHARGE-FL (Scheme II) on CIFAR-10 for varying local step distribution of stragglers.}
    %\vspace{-5mm}
    \label{fig:straggler_effect}
\end{figure}

We analyze the impact of stragglers on CHARGE-FL (Scheme II). As shown in Fig.~\ref{fig:straggler_effect} (left), when stragglers perform the same number of local steps, the method attains higher accuracy with stable convergence. However, with heterogeneous local steps, noise amplification leads to poor performance. Fig.~\ref{fig:straggler_effect} (right)  shows that as the number of stragglers grows, the accuracy loss diminishes because larger local step groups increase the normalization factor $Q_tN_{t,e}$ and suppress noise, highlighting the robustness of CHARGE-FL.
\vspace{-1.5em}
\section{Conclusions}
\label{sec:conclusions}
\vspace{-1em}
In heterogeneous OTA-FL systems, we propose the CHARGE-FL framework that allows for a flexible aggregation mechanism and a heterogeneity-aware precoding scheme that enables robust and efficient FL in wireless environments. By decoupling aggregation from fixed schedules and dynamically adapting transmission strategies based on channel availability and client readiness, CHARGE-FL effectively addresses both communication noise and variability in the computational capability of the clients. We observed from our experiments that CHARGE-FL reduces average execution time by approximately $30\%$ as compared to COTAF while providing equally efficient convergence. Experimentally, we demonstrate that the proposed algorithm outperforms the baselines under varying levels of SNR and percentage of stragglers.  
% \newpage

% References should be produced using the bibtex program from suitable
% BiBTeX files (here: strings, refs, manuals). The IEEEbib.bst bibliography
% style file from IEEE produces unsorted bibliography list.
% -------------------------------------------------------------------------
\bibliographystyle{IEEEbib}
\bibliography{strings,refs}

\newpage
\section{Appendix}
We start by defining
\begin{equation}
    \mathbf{g}_t\triangleq \sum_{e\in\mathcal{E}_t}\frac{1}{N_{t,e} Q_t}\sum_{k\in \mathcal{S}_{t,e}}(\nabla f_{j_t^n}(\bmtheta_{t,e}^k)
    \end{equation} 
    \begin{equation}
    {\mathbf{\bar g}_t}\triangleq \sum_{e\in\mathcal{E}_t}\frac{1}{N_{t,e} Q_t}\sum_{k\in \mathcal{S}_{t,e}}\nabla f(\bmtheta_{t,e}^k). \tag{1.4}
    \label{eq:1.4}
\end{equation} 
As SGD iterations are carried out on the indices $j_t^n$ which are distributed uniformly, it holds that $\Ex [\mathbf{g}_t]=\mathbf{\bar{g}_t}$.
We write 
\begin{align}
\bar\vecw_t \triangleq \vecw_{t+1}\mathbbm{1}_{t+1 \in H} - \vecw_t\mathbbm{1}_{t \in H}
\end{align}

The above noise vector satisfies

\begin{align}
    \Ex[\norm*{\bar{\vecw}_t}^2] &=  \Ex[\norm*{\vecw_{t+1}\mathbbm{1}_{t+1 \in H}}^2] +  \Ex[\norm*{\vecw_t\mathbbm{1}_{t \in H}}^2] \nonumber\\
    &= \sum_{e\in\mathcal{E}_t}\frac{d\sigma^2}{Q_t^2N_{t,e}^2\alpha_{{t+1},e}}\mathbbm{1}_{t+1 \in H}+\sum_{e\in\mathcal{E}_t}\frac{d\sigma^2}{Q_t^2N_{t,e}^2\alpha_{{t},e}}\mathbbm{1}_{t \in H}\nonumber\\
    &\leq d\sigma^2_w\sum_{e\in\mathcal{E}_t} \frac{1}{N_{t,e}^2 Q_t \min(\alpha_{{t+1},e},\alpha_{t,e})}\mathbbm{I}_t
\end{align}

Here, $Q_t$ is the sum of the binary set $\hat{\mathcal{Q}}_t \gets \{\hat{Q}_i \mid \hat{Q}_i$ is 1 if at least one client sent the update after $i^{th}$ epoch.
\begin{lemma}
    Let $\{\bmtheta^k_t\}$ be the weights from each client k and $\{\overline{\bmtheta_t}\}$ be the weights obtained when the weights trained by the clients are aggregated over the true channel. Then, when Assumption 1 and 2 are met and the SGD step size satisfies $\eta_t \leq \frac{1}{4L}$, 

    \begin{align}
&\Ex \big[\|\bar{\bmtheta}_{t+1} - \bmtheta^*\|^2 \big] 
\leq (1 - \mu \eta_t) \Ex\big[\|\bar{\bmtheta}_t - \bmtheta^*\|^2 \big]+ \nonumber \\
& \eta_t^2 \Ex \big[\|g_t - \bar{g}_t + \frac{\bar{\vecw}_t}{\eta_t} \|^2 \big]
- \frac{3}{2} \eta_t \Ex \big[F(\bar{\bmtheta}_t) - F^*\big] + 
\nonumber \\
& \sum_{e\in\mathcal{E}_t}\frac{2}{N_{t,e} Q_t} \sum_{k\in\mathcal{S}_{t,e}} \Ex \big[\|\bar{\bmtheta}_t - \bmtheta_{t,e}^k\|^2 \big] 
+ 6 L \eta_t^2 \Gamma.
\end{align}

\end{lemma}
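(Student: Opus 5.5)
We follow the standard one-step analysis of local SGD from Li et al.\ (``convergence of FedAvg on non-IID data''), as adapted to OTA aggregation in COTAF, but with the virtual sequence built from the CHARGE-FL weights $1/(N_{t,e}Q_t)$. We write the virtual iterate as $\bar\bmtheta_{t+1} = \bar\bmtheta_t - \eta_t \mathbf{g}_t + \bar\vecw_t$. Here $\bar\bmtheta_t = \sum_{e}\frac{1}{Q_tN_{t,e}}\sum_{k\in\mathcal{S}_{t,e}}\bmtheta_{t,e}^k$, plus the accumulated noise at aggregation instants, which is why $\bar\vecw_t$ carries the indicators $\mathbbm{1}_{t\in H}$. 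The Remark (normalization to one) is what makes this a convex combination, so that Jensen-type arguments apply.

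\textbf{Step 1 (decomposition).} Write
\[
\bar\bmtheta_{t+1}-\bmtheta^* = \big(\bar\bmtheta_t-\bmtheta^*-\eta_t\bar{\mathbf g}_t\big) - \eta_t\big(\mathbf g_t-\bar{\mathbf g}_t - \bar\vecw_t/\eta_t\big).
\]
Expand the square. The cross term vanishes in expectation because $\Ex[\mathbf g_t]=\bar{\mathbf g}_t$ and the noise is zero-mean and independent of the SGD randomness. This leaves $\|\bar\bmtheta_t-\bmtheta^*-\eta_t\bar{\mathbf g}_t\|^2$ plus the term $\eta_t^2\Ex\|\mathbf g_t-\bar{\mathbf g}_t+\bar\vecw_t/\eta_t\|^2$, which is kept as is in the statement. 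The sign of $\bar\vecw_t$ is immaterial under the symmetric Gaussian law.

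\textbf{Step 2 (deterministic part).} Expand $\|\bar\bmtheta_t-\bmtheta^*-\eta_t\bar{\mathbf g}_t\|^2 = \|\bar\bmtheta_t-\bmtheta^*\|^2 - 2\eta_t\langle \bar\bmtheta_t-\bmtheta^*,\bar{\mathbf g}_t\rangle + \eta_t^2\|\bar{\mathbf g}_t\|^2$, and treat each piece as follows.
\begin{itemize}
\item For $\eta_t^2\|\bar{\mathbf g}_t\|^2$, apply convexity of $\|\cdot\|^2$ over the weights $\frac{1}{Q_tN_{t,e}}$, then $L$-smoothness: $\|\nabla f_k(\bmtheta_{t,e}^k)\|^2\le 2L(f_k(\bmtheta_{t,e}^k)-f_k^*)$.
\item For the inner product, split $\bar\bmtheta_t-\bmtheta^* = (\bar\bmtheta_t-\bmtheta_{t,e}^k)+(\bmtheta_{t,e}^k-\bmtheta^*)$.
\item Bound the first piece by Young's inequality: $-2\langle \bar\bmtheta_t-\bmtheta_{t,e}^k,\nabla f_k\rangle \le \frac{1}{\eta_t}\|\bar\bmtheta_t-\bmtheta_{t,e}^k\|^2+\eta_t\|\nabla f_k\|^2$. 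After multiplying by $\eta_t$, this produces the drift term $\sum_e\frac{1}{N_{t,e}Q_t}\sum_k\|\bar\bmtheta_t-\bmtheta_{t,e}^k\|^2$ (the factor $2$ leaves slack).
\item Bound the second piece by $\mu$-strong convexity: $-\langle\bmtheta_{t,e}^k-\bmtheta^*,\nabla f_k\rangle\le -(f_k(\bmtheta_{t,e}^k)-f_k(\bmtheta^*))-\frac{\mu}{2}\|\bmtheta_{t,e}^k-\bmtheta^*\|^2$.
\end{itemize}
The $\frac{\mu}{2}$ terms, via Jensen over the convex combination, yield the contraction factor $(1-\mu\eta_t)$.

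\textbf{Step 3 (function-value terms; main obstacle).} We now need to control
\[
\eta_t^2\cdot 2L\sum(\cdot)(f_k(\bmtheta_{t,e}^k)-f_k^*) - 2\eta_t\sum(\cdot)(f_k(\bmtheta_{t,e}^k)-f_k(\bmtheta^*)).
\]
Write $f_k(\bmtheta_{t,e}^k)-f_k(\bmtheta^*) = (f_k(\bmtheta_{t,e}^k)-f_k^*) - (f_k(\bmtheta^*)-f_k^*)$. With $\eta_t\le\frac{1}{4L}$ the first parts combine with a nonpositive coefficient $-2\eta_t(1-L\eta_t)$. The second parts sum to $F^*$ minus the weighted $f_k^*$, which gives $\Gamma$ terms.

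The subtle point is that the statement keeps an explicit $-\frac32\eta_t(F(\bar\bmtheta_t)-F^*)$ rather than discarding it. To retain it we lower bound $f_k(\bmtheta_{t,e}^k)\ge f_k(\bar\bmtheta_t)+\langle\nabla f_k(\bar\bmtheta_t),\bmtheta_{t,e}^k-\bar\bmtheta_t\rangle$. We then use Young's inequality and smoothness, $\|\nabla f_k(\bar\bmtheta_t)\|^2\le 2L(f_k(\bar\bmtheta_t)-f_k^*)$, so that a fraction of the negative coefficient turns into $-\frac32\eta_t(F(\bar\bmtheta_t)-F^*)$. The leftovers are absorbed into extra drift (hence the constant $2$) and extra $\Gamma$ mass; the coefficient $6L\eta_t^2\Gamma$ arises from collecting the $L\eta_t^2$ pieces.

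Identifying $\sum_e\frac{1}{N_{t,e}Q_t}\sum_{k\in\mathcal S_{t,e}} f_k$ with the paper's $F$ uses the normalization in the Remark (up to the $1/Q_t$ convention). Matching $\Gamma$, which is defined with uniform $1/N$ weights, also requires treating the weighted $f_k^*$ average as comparable. We expect this bookkeeping, namely tracking the constants so that exactly $\frac32$, $2$ and $6L$ emerge under $\eta_t\le\frac1{4L}$, to be the delicate part. Our first attempt will be to split $-2\eta_t(1-L\eta_t)\le-\frac32\eta_t$ and use the half-weight Young inequality.
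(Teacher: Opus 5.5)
\textbf{There is a genuine gap in your Step 3, and it sits exactly at the constant $\frac{3}{2}$.}

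The paper states this lemma without proof. It is the one-step bound of Li et al.\ (via COTAF) with weights $p_k=1/(N_{t,e}Q_t)$, and the term $-\frac{3}{2}\eta_t\Ex[F(\bar\bmtheta_t)-F^*]$ is simply discarded when the theorem is proved. Your Steps 1--2 reproduce that standard argument.

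The problem is the coefficient you carry into Step 3. Your Young step in Step 2, after multiplication by $\eta_t$, leaves an extra $\eta_t^2\|\nabla f_k(\bmtheta_{t,e}^k)\|^2\le 2L\eta_t^2(f_k(\bmtheta_{t,e}^k)-f_k^*)$. This comes on top of the identical term from $\eta_t^2\|\bar{\mathbf g}_t\|^2$, and you dropped it.
\begin{itemize}
\item The correct coefficient is $4L\eta_t^2$, not $2L\eta_t^2$. The combined factor is therefore $-\gamma_t$ with $\gamma_t=2\eta_t(1-2L\eta_t)$. Under $\eta_t\le\frac{1}{4L}$ this is only guaranteed to satisfy $\gamma_t\ge\eta_t$, not $\frac{3}{2}\eta_t$.
\item Passing from $\sum_k p_k(f_k(\bmtheta_{t,e}^k)-F^*)$ to $F(\bar\bmtheta_t)-F^*$ (first-order lower bound, Young, smoothness) costs a further factor $1-L\eta_t$. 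What this route honestly gives is $-\gamma_t(1-L\eta_t)(F(\bar\bmtheta_t)-F^*)\le-\frac{3}{4}\eta_t(F(\bar\bmtheta_t)-F^*)$.
\item A $\frac{3}{2}$ would only result from replacing $\gamma_t$ by its upper bound $2\eta_t$, which is the wrong direction for a negative term.
\item Retuning the Young weights does not help. If the first Young step has weight $b$ (drift coefficient $1/b$), then at $L\eta_t=\frac14$ the available coefficient is $\frac{3-b}{2}\eta_t$ even before the conversion. Reaching $\frac{3}{2}\eta_t$ forces $b\to 0$, hence an unbounded drift coefficient, while the statement allows only $2$.
\end{itemize}
So your plan to keep a fraction of $-2\eta_t(1-L\eta_t)$ cannot work: that coefficient is miscounted, and the true one is already too small.

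\textbf{How to repair it.} If you only need what the theorem actually uses, prove the version with $-\frac{3}{4}\eta_t$, or drop the term. To genuinely obtain $\frac{3}{2}$, avoid the Young split of $\langle\bar\bmtheta_t-\bmtheta^*,\bar{\mathbf g}_t\rangle$:
\begin{itemize}
\item Combine $\mu$-strong convexity at $\bmtheta_{t,e}^k$ towards $\bmtheta^*$ with $L$-smoothness at $\bmtheta_{t,e}^k$ towards $\bar\bmtheta_t$. This gives $\langle\nabla f_k(\bmtheta_{t,e}^k),\bar\bmtheta_t-\bmtheta^*\rangle\ge f_k(\bar\bmtheta_t)-f_k(\bmtheta^*)+\frac{\mu}{2}\|\bmtheta_{t,e}^k-\bmtheta^*\|^2-\frac{L}{2}\|\bar\bmtheta_t-\bmtheta_{t,e}^k\|^2$.
\item After averaging, this produces $-2\eta_t(F(\bar\bmtheta_t)-F^*)$, the factor $1-\mu\eta_t$, and a drift coefficient at most $L\eta_t$.
\item Then write $\bar{\mathbf g}_t=\nabla F(\bar\bmtheta_t)+\Delta_t$ with $\|\Delta_t\|^2\le L^2\sum_k p_k\|\bar\bmtheta_t-\bmtheta_{t,e}^k\|^2$. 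Use $\|\bar{\mathbf g}_t\|^2\le(1+\beta)\|\nabla F(\bar\bmtheta_t)\|^2+(1+\beta^{-1})\|\Delta_t\|^2$ with $1+\beta=1/(4L\eta_t)$.
\item This yields exactly $-\frac{3}{2}\eta_t$, with total drift coefficient $L\eta_t+L^2\eta_t^2/(1-4L\eta_t)\le 2$ for $L\eta_t\le 0.24$, and needs no $\Gamma$ term at all.
\item At the endpoint $L\eta_t=\frac14$ this argument degenerates too. The stated constant there needs a finer argument or a slightly smaller step.
\end{itemize}

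\textbf{Two smaller points.}
\begin{itemize}
\item Your virtual iterate already contains the noise injected at time $t$. So the $-\vecw_t$ part of the appendix's $\bar\vecw_t$ is not independent of $\bar\bmtheta_t$ or $\bar{\mathbf g}_t$, and the cross term does not vanish by independence. Only the fresh noise $\vecw_{t+1}$ should enter the step.
\item $F$ and $\Gamma$ must be redefined with the weights $p_k$: the paper's $F$ lacks the $1/Q_t$, and its $\Gamma$ uses $1/N$. That is a change of definitions, not bookkeeping.
\end{itemize}
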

\begin{lemma}
    When the step size sequence $\{\eta_t\}$ consists of decreasing positive numbers satisfying $\eta_t \leq 2\eta_{t+E}$ for all $t\geq0$ and Assumption 3 is satisfied, then

    \begin{align}
\Ex \bigg[\norm*{ g_t - \bar{g}_t + \frac{\bar{\vecw}_t}{\eta_t}}^2 \bigg]
&\leq \sum_{e\in\mathcal{E}_t}\frac{1}{N_{t,e}^2 Q_t^2}\sum_{k\in \mathcal{S}_{t,e}} M_k^2 \nonumber\\
&+ \frac{4d\sigma^2_wG^2\mathbbm{I}_t}{P Q_t} \sum_{e\in\mathcal{E}_t} \frac{e^2}{N_{t,e}^2  }.
\end{align}

\end{lemma}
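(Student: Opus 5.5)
Expanding the square, the plan is to write
\[
\Ex\Big[\big\|\mathbf{g}_t-\bar{\mathbf{g}}_t+\tfrac{\bar\vecw_t}{\eta_t}\big\|^2\Big]
=\Ex\big[\|\mathbf{g}_t-\bar{\mathbf{g}}_t\|^2\big]+\tfrac{1}{\eta_t^2}\Ex\big[\|\bar\vecw_t\|^2\big]+\tfrac{2}{\eta_t}\Ex\big[\langle \mathbf{g}_t-\bar{\mathbf{g}}_t,\bar\vecw_t\rangle\big].
\]
The cross term vanishes because the channel noise $\tilde\vecw_{t,e}$ is zero-mean and independent of the SGD sample indices. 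Conditioned on the iterates, $\mathbf{g}_t-\bar{\mathbf{g}}_t$ also has zero mean, since $\Ex[\mathbf{g}_t]=\bar{\mathbf{g}}_t$. This leaves two terms, which I will bound separately.

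For the gradient-noise term, I write $\mathbf{g}_t-\bar{\mathbf{g}}_t=\sum_{e\in\mathcal{E}_t}\frac{1}{N_{t,e}Q_t}\sum_{k\in\mathcal{S}_{t,e}}\big(\nabla f_{j_t^k}(\bmtheta_{t,e}^k)-\nabla f_k(\bmtheta_{t,e}^k)\big)$. The per-client sampling errors are independent across clients and zero-mean, so all cross terms between distinct $k$ vanish. This holds whether the two clients lie in the same group or in different groups $e\neq e'$, because the sets $\mathcal{S}_{t,e}$ are disjoint. Hence the second moment equals $\sum_{e}\frac{1}{N_{t,e}^2Q_t^2}\sum_{k\in\mathcal{S}_{t,e}}\Ex\|\nabla f_{j_t^k}-\nabla f_k\|^2$. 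Applying AS2 (the variance bound $M_k^2$) then gives exactly the first term of the claim.

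For the channel-noise term, I start from the earlier bound
\[
\Ex\|\bar\vecw_t\|^2\le d\sigma_w^2\sum_{e}\frac{\mathbbm{I}_t}{N_{t,e}^2Q_t\min(\alpha_{t+1,e},\alpha_{t,e})}.
\]
The Lemma on the precoding factor gives $1/\alpha_{s,e}\le e^2\eta_{\max}^2G^2/P$, where $\eta_{\max}$ is the largest step size in the relevant window. Within a round the local steps run from the last aggregation time, which is at most $E$ steps before $t+1$, so the relevant maximum is at most $\eta_{t-E}$. Since the $\eta_s$ are decreasing with $\eta_s\le 2\eta_{s+E}$, I get $\eta_{\max}\le 2\eta_t$, hence $\eta_{\max}^2\le 4\eta_t^2$. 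Dividing by $\eta_t^2$ then yields $\frac{4d\sigma_w^2G^2\mathbbm{I}_t}{PQ_t}\sum_e\frac{e^2}{N_{t,e}^2}$, which is the second term.

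The main obstacle is justifying $\eta_{\max}\le 2\eta_t$ rigorously. This requires pinning down that the steps entering $\Delta\bmtheta_{t,e}^k$ all lie within the $E$-window preceding the aggregation time. It also requires handling both indices $t$ and $t+1$ appearing in $\bar\vecw_t$, by taking the worse of the two (the $\min$ over $\alpha$) and using monotonicity. A secondary point is checking that $\mathcal{E}_t$ and $N_{t,e}$ refer to the same aggregation event for both indicator terms, which I would handle by assuming aggregation instants are separated by at least one step, so at most one indicator is active.
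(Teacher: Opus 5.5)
Your proposal is correct and follows essentially the same route as the paper: the paper defers the split into the SGD-variance term and the channel-noise term (with the cross term vanishing) to COTAF's (A.12), and then bounds $1/\min(\alpha_{t+1,e},\alpha_{t,e})\le 4e^2\eta_t^2G^2/P$ via the same window inequality $\|\sum r\|^2\le e\sum\|r\|^2$ together with the doubling condition $\eta_{t-e}\le 2\eta_t$. Your reading of $\eta_{\max}$ as the largest step size in the relevant window, rather than the global maximum in the precoding lemma's statement, is exactly what the paper's own recomputation of that bound uses.
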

\textbf{Lemma A2}
Following till equation A.12 of cotaf, we get

\begin{align}
 \Ex[\norm*{\frac{\bar{\vecw}_t}{\eta_t}}^2] \leq \frac{d\sigma^2_w\mathbf{I}_t}{\eta_t^2}\sum_{e\in\mathcal{E}_t} \frac{1}{N_{t,e}^2 Q_t\min(\alpha_{{t+1},e},\alpha_{t,e})}
\end{align}

Bounding 
$ \alpha_{t,e}^k \triangleq \frac{P}{\max_{k\in \mathcal{S}_{t,e}} \Ex \left[\norm*{  \bmtheta_{t,e}^k - \bmtheta_{t-e}^k }^2 \right]}$ via:
\begin{align}
    \frac{1}{\alpha_{t,e}^k}&\underset{\text{(i)}}{\leq} \frac{1}{P}\max \Ex\left[\norm*{\sum_{t'=t-e}^{t-1}\eta_{t'}\nabla f(\bmtheta_{t'}^k)}^2\right] \nonumber\\
    &  \underset{\text{(ii)}}{\leq} \frac{1}{P} \max \left[ e \eta_{t-e}^2\sum_{t'=t-e}^{t-1} \Ex\norm*{\nabla f(\bmtheta_{t'}^k)}^2\right]\nonumber\\
    &\underset{\text{(iii)}}{\leq} \frac{1}{P}e^2\eta_{t-e}^2 G^2 \underset{\text{(iv)}}{\leq} \frac{1}{P}4e^2\eta_t^2 G^2
\end{align}
where \text{(i)} follows from \eqref{eq:localSGD},  \text{(ii)} follows from inequality $\norm*{\sum_{t'=t-e}^e r_t}^2 \leq e\sum_{t'=t-e}^e \norm*{r_t}^2$, \text{(iii)} holds by AS2 and  \text{(iv)} holds as $\eta_t \leq 2 \eta_{t+e}$ for all $t>0$.
Now 
\begin{align}
    \frac{1}{\min(\alpha_{t+1}^e,\alpha_t^e)}&=\max\left( \frac{1}{\alpha_{t,e}},\frac{1}{\alpha_{{t+1},e}}\right)\nonumber\\
    &\leq \frac{1}{P}4e^2G^2\eta_{t}^2
\end{align}

Therefore
\begin{align}
 \Ex[\norm*{\frac{\bar{\vecw}_t}{\eta_t}}^2] \leq \frac{4d\sigma^2_wG^2\mathbbm{I}_t}{P Q_t} \sum_{e\in\mathcal{E}_t} \frac{e^2}{N_{t,e}^2  }
\end{align}

\begin{lemma}
    When the step size sequence $\{\eta_t\}$ consists of decreasing positive numbers satisfying $\eta_t \leq 2\eta_{t+E}$ for all $t\geq0$ and Assumption 3 holds then,
     \begin{align}
\Ex \bigg[ \norm*{\bar{\bmtheta}_t - \bmtheta_{t,e}^k }^2 \bigg]
&\leq 4\eta^2_tG^2 \sum_{e\in\mathcal{E}_t} e^2
\end{align}
\end{lemma}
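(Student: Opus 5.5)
The approach is the standard ``divergence from the virtual average'' argument used in COTAF/FedAvg analyses, adapted to the heterogeneous-step aggregation rule \eqref{eq:aggregationrule}. Fix $t$ and let $t_0\le t$ be the most recent aggregation instant in $\mathcal{T}_{\mathrm{agg}}$. At $t_0$ every client is synchronized, so $\bmtheta_{t_0}^k=\bar\bmtheta_{t_0}$ for all $k$. Since $t-t_0\le \tau_r E\le E$, a client in $\mathcal{S}_{t,e}$ has taken at most $e\le E$ local SGD steps since $t_0$. I would therefore write
\[
\bmtheta_{t,e}^k-\bar\bmtheta_{t_0}=-\sum_{s=t_0}^{t_0+e-1}\eta_s\nabla f_{j_s^k}(\bmtheta_s^k),
\]
and express $\bar\bmtheta_t-\bar\bmtheta_{t_0}$ as the corresponding convex combination of such sums, using the weights $\frac{1}{Q_tN_{t,e'}}$. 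The Remark guarantees these weights sum to one.

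\textbf{Step 1.} Decompose
\[
\bar\bmtheta_t-\bmtheta_{t,e}^k=(\bar\bmtheta_t-\bar\bmtheta_{t_0})-(\bmtheta_{t,e}^k-\bar\bmtheta_{t_0})
\]
and apply $\|a-b\|^2\le 2\|a\|^2+2\|b\|^2$. Alternatively, I would use the variance-type identity $\Ex\|X-\Ex X\|^2\le \Ex\|X\|^2$ over the aggregation weights, which avoids the factor $2$.

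\textbf{Step 2.} Bound each term with the argument of the precoding-factor lemma. By Cauchy--Schwarz,
\[
\Bigl\|\sum_{s=t_0}^{t_0+e'-1}\eta_s\nabla f\Bigr\|^2\le e'\sum_s\eta_s^2\|\nabla f\|^2 .
\]
Then AS2 gives the bound $e'^2\eta_{t_0}^2G^2$. The condition $\eta_{t_0}\le 2\eta_{t_0+E}\le 2\eta_t$, which holds because the step size is decreasing and $t-t_0\le E$, converts this to $4e'^2\eta_t^2G^2$.

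\textbf{Step 3.} Sum over the groups. The averaged term is bounded via Jensen by
\[
\sum_{e'\in\mathcal{E}_t}\frac{1}{Q_tN_{t,e'}}\sum_{k'\in\mathcal{S}_{t,e'}}4e'^2\eta_t^2G^2\le 4\eta_t^2G^2\max_{e'}e'^2 .
\]
The client's own term is bounded by $4e^2\eta_t^2G^2$. Both are dominated by $4\eta_t^2G^2\sum_{e\in\mathcal{E}_t}e^2$, since each individual $e^2$, including the maximum, is at most the sum over $\mathcal{E}_t$. This bound is crude but matches the stated form.

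\textbf{Main obstacle.} The hard part is the constant. Naive use of the $2\|a\|^2+2\|b\|^2$ split gives $8\eta_t^2G^2\sum e^2$ rather than $4$.

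To get exactly $4$, I would first try the variance identity. Write $\bar\bmtheta_t-\bmtheta_{t,e}^k$ as a deviation of client $k$'s displacement from the weighted mean displacement. Taking expectation over the aggregation weights, the \emph{average} over $(e,k)$ of these squared deviations is at most the average of $\|\bmtheta_{t,e'}^{k'}-\bar\bmtheta_{t_0}\|^2$, which is at most $4\eta_t^2G^2\max e'^2$.

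For a single fixed $(e,k)$, though, I expect to need the slack $\sum_{e\in\mathcal{E}_t}e^2\ge \max e^2+\min e^2$ when $Q_t\ge2$. The case $Q_t=1$ needs separate treatment: there all clients share the same $e$, so in the noiseless virtual sequence the deviation reduces to a pure variance term, which is bounded by $e^2\cdot4\eta_t^2G^2$.

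One further subtlety I would check is that Lemma~1 ensures $t-t_0\le E$ even with adaptive $\tau_r$; this is what lets the step-size condition $\eta_t\le2\eta_{t+E}$ be used.
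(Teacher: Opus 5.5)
\textbf{Gap: the fixed-client part fails.} The inequality $\Ex\|X-\Ex X\|^2\le\Ex\|X\|^2$ controls only the \emph{weighted average} of $\|\bar{\bmtheta}_t-\bmtheta_{t,e}^k\|^2$ over clients, with weights $\frac{1}{Q_tN_{t,e}}$. It says nothing about an individual client. So your claim that for $Q_t=1$ a fixed client's deviation ``reduces to a pure variance term'' bounded by $4e^2\eta_t^2G^2$ is wrong. The slack $\sum_e e^2\ge\max e^2+\min e^2$ cannot restore the constant either.

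In fact the per-client bound with constant $4$ is false. Take $\mathcal{E}_t=\{1\}$, so $Q_t=1$ and $t_0=t-1$, with full-batch gradients ($M_k=0$). Let the gradient at the synchronized point $\bar{\bmtheta}_{t_0}$ be $G\mathbf{u}$ for client $k$ and $-G\mathbf{u}$ for the other $N-1$ clients; quadratics $f_j$ with suitably placed minimizers realize this. Then
\[
\bar{\bmtheta}_t-\bmtheta_{t,1}^k=\frac{2(N-1)}{N}\,\eta_{t-1}G\,\mathbf{u},\qquad
\|\bar{\bmtheta}_t-\bmtheta_{t,1}^k\|^2=\frac{4(N-1)^2}{N^2}\,\eta_{t-1}^2G^2 .
\]
This exceeds the claimed $4\eta_t^2G^2$ as soon as $\frac{N-1}{N}\eta_{t-1}>\eta_t$, for example for large $N$ under the theorem's strictly decreasing schedule. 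It approaches $16\eta_t^2G^2$ for large $N$ when $\eta_{t-1}$ is close to $2\eta_t$, which the step-size hypothesis allows. So no case analysis on $Q_t$ can close a fixed-client argument.

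\textbf{What to prove instead.} The paper states this lemma without proof. Its only use is in (1.16), through the averaged quantity $\sum_{e}\frac{2}{N_{t,e}Q_t}\sum_{k}\Ex\|\bar{\bmtheta}_t-\bmtheta_{t,e}^k\|^2$, which is bounded there by $\frac{8\eta_t^2G^2}{Q_t}\sum_e e^2$. That factor $\frac{1}{Q_t}$ can only come from an averaged bound, not from the per-client one. So the averaged bound is the statement to prove, and your ``Main obstacle'' route proves it directly, as in Li et al.'s Lemma 3.

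Let $Y$ take the value $\bmtheta_{t,e}^k-\bar{\bmtheta}_{t_0}$ with probability $\frac{1}{Q_tN_{t,e}}$; this is a distribution by the Remark. Then $\Ex_Y Y=\bar{\bmtheta}_t-\bar{\bmtheta}_{t_0}$ for the noise-free virtual average; the channel noise is carried separately by $\bar{\vecw}_t$. Combining the variance inequality with your Step 2 gives
\[
\sum_{e\in\mathcal{E}_t}\frac{1}{Q_tN_{t,e}}\sum_{k\in\mathcal{S}_{t,e}}\Ex\|\bar{\bmtheta}_t-\bmtheta_{t,e}^k\|^2
\le\sum_{e\in\mathcal{E}_t}\frac{1}{Q_tN_{t,e}}\sum_{k\in\mathcal{S}_{t,e}}\Ex\|\bmtheta_{t,e}^k-\bar{\bmtheta}_{t_0}\|^2
\le\frac{4\eta_t^2G^2}{Q_t}\sum_{e\in\mathcal{E}_t}e^2 .
\]
Keep the group weights $\frac{1}{Q_t}$ rather than passing to $\max e^2$: this is exactly the factor used in (1.16). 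The result is at most $4\eta_t^2G^2\sum_e e^2$, so neither the factor-$2$ issue nor a case split arises.

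A minor point: $t-t_0\le E$ follows from $\tau_r\le 1$, not from Lemma 1.
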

Now to prove the theorem, we first define $\delta_t \triangleq \Ex[\lVert\bar\bmtheta_t-\bmtheta^*\rVert^2]$. Then from Lemmas we have

 \begin{align*}
    \delta_{t+1} &\leq (1-\mu\eta_t)\delta_t\nonumber\\
    &+ \eta_t^2  \left(\sum_{e\in\mathcal{E}_t}\frac{1}{N_{t,e}^2 Q_t^2}\sum_{k\in \mathcal{S}_{t,e}} M_k^2+ \frac{4d\sigma^2_wG^2\mathbbm{I}_t}{P Q_t} \sum_{e\in\mathcal{E}_t} \frac{e^2}{N_{t,e}^2}\right) \nonumber\\
    & - \frac{3}{2} \eta_t \Ex[F(\bar\bmtheta_t)-F^*] + \frac{8\eta_t^2 G^2}{Q_t} \sum_{e\in\mathcal{E}_t}e^2 + 6L\eta_t^2 \Gamma.\tag{1.16}
    \label{eq:1.16}
\end{align*}
The upper bound in \eqref{eq:1.16} can be reformulated as \cite{sery2021over}
\begin{equation}
     \delta_{t+1} \leq (1-\mu\eta_t)\delta_t + \eta_t^2C, \tag{1.17}
     \label{eq:1.17}
\end{equation}
where 
\begin{align}
    C=B + \frac{4d\sigma^2_wG^2\mathbbm{I}_t}{P Q_t} \sum_{e\in\mathcal{E}_t} \frac{e^2}{N_{t,e}^2}
    \end{align}
    and 
\begin{align}
    B\triangleq \frac{8 G^2}{Q_t} \sum_{e\in\mathcal{E}_t}e^2+\sum_{e\in\mathcal{E}_t}\frac{1}{N_{t,e}^2 Q_t^2}\sum_{k\in \mathcal{S}_{t,e}} M_k^2+6L\Gamma.
 \end{align}
 
 \begin{align}
\mathbb{E}[F(\bmtheta_t)] - F(\bmtheta^*) \leq \sum_{e\in\mathcal{E}_t}\frac{e^2}{N_{t,e}^2 Q_t}
\frac{2L \max\left(4\hat{C}, \mu^2 \gamma \delta_0\right)}{\mu^2 (t + \gamma)}.
\end{align}
where 
\begin{align}
    \hat{C}=\hat{B} + \frac{4d\sigma^2_w G^2}{P}
    \end{align}
    and 
\begin{align}
    \hat{B} &= 8 G^2 \sum_{e\in\mathcal{E}_t}N_{t,e}^2+\sum_{e\in\mathcal{E}_t}\frac{1}{e^2 Q_t}\sum_{k\in \mathcal{S}_{t,e}} M_k^2 \notag\\ &+ 6L\Gamma \left(\sum_{e\in\mathcal{E}_t}\frac{e^2}{N_{t,e}^2 Q_t}\right)^{-1} 
 \end{align}

\begin{lemma}
Assume that the server collects the outputs of the first \(K\) responded devices such that the aggregation step of the server is $\bmtheta_{t+E} \leftarrow \frac{N}{K} \sum_{k\in \mathcal{S}_t} p_k \bmtheta_{t+E}^k $ where $\mathcal{S}_t$ is the set of K devices in the $t^{th}$ iteration, then 
\begin{align} \Ex[\frac{N}{K} \sum_{k\in \mathcal{S}_t} p_k] = 1\end{align}
\end{lemma}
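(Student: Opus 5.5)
The plan is a symmetry argument over the random choice of the responding set $\mathcal{S}_t$. As in the partial-participation analysis of \cite{li2019convergence} that this lemma mirrors, I model $\mathcal{S}_t$ as a uniformly random $K$-subset of $[N]$, drawn without replacement and independently of the weights $p_k$. Under the threshold rule with i.i.d.\ block fading, each client falls below $\hat h$ with the same probability, so the selection is exchangeable across clients. The weights are the usual normalized ones, $p_k \ge 0$ with $\sum_{k=1}^N p_k = 1$; in the uniform case $p_k = 1/N$.

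The first step is to rewrite the random sum using indicators:
\begin{align}
\sum_{k\in\mathcal{S}_t} p_k = \sum_{k=1}^N p_k\,\mathbbm{1}\{k\in\mathcal{S}_t\}.
\end{align}
By linearity of expectation, $\Ex\big[\sum_{k\in\mathcal{S}_t} p_k\big] = \sum_{k=1}^N p_k\,\Pr(k\in\mathcal{S}_t)$.

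The second step computes the inclusion probability. The number of $K$-subsets containing a fixed $k$ is $\binom{N-1}{K-1}$, so
\begin{align}
\Pr(k\in\mathcal{S}_t)=\binom{N-1}{K-1}\Big/\binom{N}{K}=\frac{K}{N}.
\end{align}
Equivalently, exchangeability gives the same value for every $k$, and $\sum_k \mathbbm{1}\{k\in\mathcal{S}_t\}=K$ then forces the common value to be $K/N$.

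Substituting, $\Ex\big[\frac{N}{K}\sum_{k\in\mathcal{S}_t}p_k\big] = \frac{N}{K}\cdot\frac{K}{N}\sum_k p_k = 1$. Because the factor $N/K$ is deterministic, it passes straight through the expectation. The same indicator argument, applied vectorwise to $\Ex\big[\frac{N}{K}\sum_{k\in\mathcal{S}_t} p_k\bmtheta^k_{t+E}\big]$ with $\mathcal{S}_t$ independent of the local iterates, also gives unbiasedness of the aggregate, $\sum_k p_k\bmtheta^k_{t+E}$. This is what feeds the fading term $\tilde D$ in Theorem~\ref{thm:convergence}.

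The main obstacle is justifying the modelling assumptions rather than the calculation. We need $|\mathcal{S}_t|=K$ fixed, uniform selection, and independence from $p_k$ and the iterates. Under fading thresholding $K$ is itself random, since it is the number of clients with $h_t^k>\hat h$. I would handle this by conditioning on $K$: given $|\mathcal{S}_t|=K$, i.i.d.\ channels make every $K$-subset equally likely, so the identity holds conditionally and therefore unconditionally by the tower property.

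With non-uniform $p_k$ and ``first $K$ responders'' driven by heterogeneous compute, uniformity fails and the identity can break. I would state explicitly that the lemma assumes responder identity is independent of, and symmetric across, clients.
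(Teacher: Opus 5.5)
Your proof is correct and follows the same route as the paper: rewrite the sum with indicators $\mathbbm{1}(k\in\mathcal{S}_t)$, apply linearity and $\Pr(k\in\mathcal{S}_t)=K/N$ under uniform sampling, then use $\sum_k p_k=1$. Your counting justification of $K/N$ is a welcome addition. So is your explicit caveat that the result needs uniform sampling independent of the $p_k$, which the paper simply assumes despite the ``first $K$ responders'' wording.
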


This lemma is a known result \cite{li2019convergence} and establishes the unbiasedness of the aggregation step. The proof of this lemma is given below for reference and for further guidance. 

\begin{proof}
Let K be the number of clients randomly selected out of the total N clients. We have to show that $\Ex[\frac{N}{K} \sum_{k\in \mathcal{S}_t} p_k] = 1$ The randomness in the equation is present in the selection of the clients in each round. We can capture this randomness using a variable as follows.

\begin{align} \frac{N}{K} \sum_{k\in \mathcal{S}_t} p_k =  \frac{N}{K} \sum_{k=1}^N p_k \mathbbm{1}(k \in \mathcal{S}_t)\end{align}
where \begin{align}
    \mathbbm{1}(k \in \mathcal{S}_t)= 
\begin{cases}
    1,& \text{if } k \in \mathcal{S}_t\\
    0,              & \text{else}
\end{cases}
\end{align}
Here the randomness is captured by the indicator function and is the only source of randomness.

To find the expectation, we know that $\Ex[\mathbbm{1}(k \in \mathcal{S}_t)]=P(k \in \mathcal{S}_t)$.

If we sample the clients uniformly random, then $P(k \in \mathcal{S}_t) = \frac{K}{N}$. 

Using these facts we can prove that the expectation is 1.
\begin{align}\Ex[\frac{N}{K} \sum_{k\in \mathcal{S}_t} p_k]=\Ex[\frac{N}{K} \sum_{k=1}^N p_k \mathbbm{1}(k \in \mathcal{S}_t)]\end{align}
\begin{align} = \frac{N}{K} \sum_{k=1}^N p_k \Ex[\mathbbm{1}(k \in \mathcal{S}_t)]\end{align}
\begin{align} = \frac{N}{K} \sum_{k=1}^N p_k P((k \in \mathcal{S}_t))\end{align}
\begin{align} = \frac{N}{K} \sum_{k=1}^N p_k \frac{K}{N}\end{align}
\begin{align} =\sum_{k=1}^N p_k\ = 1 \end{align}
Hence, $\Ex[\frac{N}{K} \sum_{k\in \mathcal{S}_t} p_k] = 1$.

\end{proof}

\subsection{Proposed Aggregation Equation}
We propose a system that does not wait for each client to finish running E-local epochs. Some clients might have very low computing power or might be doing other tasks which can slow down the model updates. If we wait for each client to finish model updates then it slows down the process quite a lot. 

Instead of waiting for each client to finish their local epochs, we only wait for the time required for each client to finish at least one local epoch, and then the server aggregates the updates from the clients. The clients send the update from the last epoch finished by them. The aggregation step for this system is as follows.

\textbf{Aggregation Equation Formulation}

\begin{proof}
Let the simple aggregation step be given by $\bmtheta_t \leftarrow \sum_{k=1}^N p_k \bmtheta_{t,e}^k $. To capture the randomness in this equation, we model $\bmtheta_{t,e}^k = \bmtheta_t^k \vecx_t^k$, where $\vecx_t^k$ is a one hot encoded vector which is 1 only at the index for the last complete local epoch and $\bmtheta_t^k$ is the local update. The size of $\bmtheta_t^k$ is $E\times d$, representing the model parameters for each epoch, and each column represents the model parameters for epoch e. The size of $\vecx_t^k$ is $E\times 1$. 

If we consider $\vecx_t$ as a matrix of size ExN where each column is $x_k^t$, then we have a matrix of the form $[x_1^t x_2^t   ...  x_N^t] = \vecx_t$

Taking the expectation of this equation, we get, 
\begin{align} \Ex[\sum_{k=1}^N p_k \bmtheta_t^k \vecx_t^k] = \sum_{k=1}^N p_k \bmtheta_t^k \Ex[\vecx_t^k]\end{align}
\begin{align}= \sum_{k=1}^N p_k \bmtheta_t^k \mu_t\end{align}
where $\mu_t$ is the mean of the $\vecx_t$ matrix, where $x_k^t$ is a multinomial variable. 
\begin{align}\mu_t = \begin{bmatrix}
           \frac{N_1}{N} \\
           \frac{N_2}{N} \\
           \vdots \\
           \frac{N_{t,e}}{N}
         \end{bmatrix}\end{align}

Hence,
\begin{align}
 \Ex[\sum_{k=1}^N p_k \bmtheta_t^k \vecx_t^k] &= \frac{N_1}{N}\sum_{k\in \mathcal{S}_1} p_k \bmtheta_{t,1}^{k}\ + \frac{N_2}{N}\sum_{k\in \mathcal{S}_2} p_k \bmtheta_{t,2}^{k}+ \\
 &.... + \frac{N_{t,e}}{N}\sum_{k\in \mathcal{S}_{t,E}} p_k \bmtheta_{t,E}^{k}   
\end{align}

\begin{align} \Ex[\sum_{k=1}^N p_k \bmtheta_t^k \vecx_t^k] = \sum_{e=1}^E\frac{N_{t,e}}{N}\sum_{k\in \mathcal{S}_{t,e}} p_k \bmtheta_{t,e}^{k}\end{align}

Further we can prove that \begin{align} \Ex[\sum_{Ne\neq 0, e=1}^{E}\frac{N}{|N_{t,e}|Q_t} \sum_{k\in \mathcal{S}_{t,e}} p_k] = 1\end{align}

\begin{align}
 &\Ex[\sum_{Ne\neq 0, e=1}^{E}\frac{N}{|N_{t,e}|Q_t} \sum_{k\in \mathcal{S}_{t,e}} p_k] = \Ex[\frac{N}{N_1 Q_t}\sum_{k\in \mathcal{S}_1} p_k\\  &+ \frac{N}{N_2 Q_t}\sum_{k\in \mathcal{S}_2} p_k+.... + \frac{N}{N_{t,e} Q_t}\sum_{k\in \mathcal{S}_{t,E}} p_k ] \nonumber\\
&= \Ex[\frac{N}{N_1 Q_t}\sum_{k=1}^N p_k \mathbbm{1}(k \in \mathcal{S}_1) + \frac{N}{N_2 Q_t}\sum_{k=1}^N p_k  \mathbbm{1}(k \in \mathcal{S}_2)+ \nonumber\\
&.... + \frac{N}{N_{t,e} Q_t}\sum_{k=1}^N p_k  \mathbbm{1}(k \in \mathcal{S}_{t,E})],  N_{t,e}\ne0
\end{align}

where \begin{align}
    \mathbbm{1}(k \in \mathcal{S}_{t,e})= 
\begin{cases}
    1,& \text{if } k \in \mathcal{S}_{t,e}\\
    0,              & \text{else}
\end{cases}
\end{align}

Here the randomness is captured by the indicator function and is the only source of randomness.

To find the expectation, we know that $\Ex[\mathbbm{1}(k \in \mathcal{S}_{t,e})]=P(k \in \mathcal{S}_{t,e})$.

If we sample the clients uniformly random then $P(k \in \mathcal{S}_{t,e}) = \frac{N_{t,e}}{N}$. 

\begin{align}   
&= \frac{N}{N_1 Q_t}\sum_{k=1}^N p_k \Ex[\mathbbm{1}(k \in \mathcal{S}_1)] + \frac{N}{N_2 Q_t}\sum_{k=1}^N p_k \Ex[\mathbbm{1}(k \in \mathcal{S}_2)]+\nonumber\\
&.... + \frac{N}{N_{t,e} Q_t}\sum_{k=1}^N p_k \Ex[\mathbbm{1}(k \in \mathcal{S}_{t,E})],  N_{t,e}\ne0\nonumber\\
&= \frac{N}{N_1 Q_t}\sum_{k=1}^N p_k P(\mathbbm{1}(k \in \mathcal{S}_1)) + \frac{N}{N_2 Q_t}\sum_{k=1}^N p_k  P(\mathbbm{1}(k \in \mathcal{S}_2))+ \nonumber\\
&.... + \frac{N}{N_{t,e} Q_t}\sum_{k=1}^N p_k P(\mathbbm{1}(k \in \mathcal{S}_{t,E})),  N_{t,e}\ne0\nonumber\\
&= \frac{N}{N_1 Q_t}\sum_{k=1}^N p_k  \frac{N_1}{N} + \frac{N}{N_2 Q_t}\sum_{k=1}^N p_k  \frac{N_2}{N}+\nonumber\\
&.... + \frac{N}{N_{t,e} Q_t}\sum_{k=1}^N p_k  \frac{N_{t,e}}{N},  N_{t,e}\ne0\nonumber\\
&= \frac{1}{Q_t}\sum_{k=1}^N p_k + \frac{1}{Q_t}\sum_{k=1}^N p_k+ .... + \frac{1}{Q_t}\sum_{k=1}^N p_k \nonumber\\
&= \frac{1}{Q_t} + \frac{1}{Q_t}+ .... + \frac{1}{Q_t}=1
\end{align}
 
Hence,  $ \Ex[\sum_{Ne\neq 0, e=1}^{E}\frac{N}{|N_{t,e}|Q_t} \sum_{k\in \mathcal{S}_{t,e}} p_k] = 1$

Therefore, looking at the expectation, we arrive at the aggregation rule, 
\begin{align} \bmtheta_{t} \leftarrow \sum_{Ne\neq 0, e=1}^{E}\frac{N}{|N_{t,e}|Q_t} \sum_{k\in \mathcal{S}_{t,e}} p_k \bmtheta_{t,e}^k \end{align}

Let $\mathcal{\hat{Q}}$ be a binary set that has length E, and each value is 1 or 0, representing whether at least one client has done the $i^{th}$ epoch. 
So, $Q_t = \sum_{i=1}^E \hat{Q}_i$, where $\hat{Q}_i$ is the $i^{th}$ element of the set. 

where $Q_t$ is the number of non-zero $N_{t,e}$ and $\mathcal{S}_{t,e}$ is the set of devices that finish e local epochs.

\end{proof}

% -------------------------------------------------------------------------

\end{document}